\documentclass[conference]{IEEEtran}

\usepackage{cite}
\usepackage{amsmath,amssymb,amsfonts}
\usepackage{amsthm}
\usepackage{algorithmic}
\usepackage{graphicx}
\usepackage{textcomp}
\usepackage{xcolor}
\usepackage{subfig}
\newtheorem{prop}{Proposition}
\newtheorem{definition}{Definition}
\def\BibTeX{{\rm B\kern-.05em{\sc i\kern-.025em b}\kern-.08em
    T\kern-.1667em\lower.7ex\hbox{E}\kern-.125emX}}
\begin{document}

\title{Towards Consistent Predictive Confidence through Fitted Ensembles\\}

\author{\IEEEauthorblockN{Navid Kardan}
\IEEEauthorblockA{\textit{Department of Computer Science} \\
\textit{University of Central Florida}\\
Orlando, USA \\
nkardan@cs.ucf.edu}
\and
\IEEEauthorblockN{Ankit Sharma}
\IEEEauthorblockA{\textit{Department of Computer Science} \\
\textit{University of Central Florida}\\
Orlando, USA \\
ankit.sharma285@knights.ucf.edu}
\and
\IEEEauthorblockN{Kenneth O. Stanley}
\IEEEauthorblockA{\textit{Department of Computer Science} \\
\textit{University of Central Florida}\\
Orlando, USA \\
0000-0003-4503-0839 }
}

\maketitle

\begin{abstract}
Deep neural networks are behind many of the recent successes in machine learning applications. However, these models can produce overconfident decisions while encountering out-of-distribution (OOD) examples or making a wrong prediction. This inconsistent predictive confidence limits the integration of independently-trained learning models into a larger system. This paper introduces \emph{separable concept learning} framework to realistically measure the performance of classifiers in presence of OOD examples. In this setup, several instances of a classifier are trained on different parts of a partition of the set of classes. Later, the performance of the combination of these models is evaluated on a separate test set. Unlike current OOD detection techniques, this framework does not require auxiliary OOD datasets and does not separate classification from detection performance.  
Furthermore, we present a new strong baseline for more consistent predictive confidence in deep models, called \emph{fitted ensembles}, where overconfident predictions are rectified by transformed versions of the original classification task. Fitted ensembles can naturally detect OOD examples without requiring auxiliary data by observing contradicting predictions among its components. Experiments on MNIST, SVHN, CIFAR-10/100, and ImageNet show fitted ensemble significantly outperform conventional ensembles on OOD examples and are possible to scale.  
\end{abstract}

\section{Introduction}
Classification is a fundamental problem of machine learning \cite{breiman2017classification,DudaHart2nd,dietterich1998approximate,duda1973pattern,cover1967nearest,fisher1936,hastie,fisher1938,vapnik}, where, the goal is to assign an observation (instance) to a set of classes.

An important subclass of classifiers is \emph{probabilistic classifiers}, which predict a probability distribution over a set of classes rather than just the most likely class. The classifier's decision can then be the class with the highest probability. More formally, while a classifier is a mathematical function $h:X\mapsto Y$ that assigns an instance $x \in X$ to a class $y \in Y$, a probabilistic classifier outputs the conditional probability $p(Y|X)$. This approach is appealing because, for example, these probability values can be considered as confidence values that can be used to reject uncertain observations.

Neural networks are naturally probabilistic classifiers. They are designed to assign probability values to different classes so that the sum of these values is one. However, a possible caveat in this approach is vulnerability to \emph{out-of-distribution} examples (overgeneralization) or in-distribution examples that the classifier is wrongly confident about (confident misclassification). We combine these two problems, i.e.\ overgeneralization, and confident misclassification, and refer to them as \emph{inconsistent prediction confidence}.

Fitted ensembles can naturally detect OOD examples without requiring auxiliary data by observing contradicting predictions among its components. Experiments on MNIST, SVHN, CIFAR-10/100, and ImageNet show fitted ensemble significantly outperform conventional ensembles on OOD examples and are possible to scale.  

In short, the contributions of this paper are:
\begin{itemize}
\item Introduction of \emph{separable concept learning} framework to streamline measurement of classifiers' performance in presence of relevant OOD examples. This framework does not rely on auxiliary OOD datasets and combines classification and detection performance.  
\item We propose a new ensembling strategy, called \emph{fitted ensembles}, to achieve more consistent predictive confidence in deep models.
\end{itemize}

\section{Related Works}
Inconsistent prediction confidence is related to model calibration\cite{guo2017calibration,MM2020}, out-of-distribution detection\cite{lecun1989backpropagation,our,hendrycks2016baseline,liang2017enhancing,lee2018simple,devries2018learning,hendrycks2018deep,ApoorvECCV,sastry2020detecting,chen2020robust,Energy2020}, and uncertainty estimation\cite{galuncertainty,Ovadia,posterior2020,Ustatistics2020}. Model calibration in neural networks tries to calibrate their predictive probabilities so that they match their accuracy. A common technique for calibration is temperature scaling \cite{guo2017calibration}, where outputs of the neural network are divided by a scalar, called temperature, before passing to the softmax layer.

By contrast, for consistent predictive confidence, the goal in regards to in-distribution (familiar) examples, is to introduce techniques that minimize predictive confidence for misclassified examples. The discrepancy between the expected confidence of the model and its classification accuracy can then possibly be corrected by a model calibration method. 

In the out-of-distribution detection problem, a threshold over the confidence values for each prediction teases out out-of-distribution examples. Like model calibration, this is also done as a postprocessing procedure. LeCun et.~al.~\cite{lecun1989backpropagation} utilize the difference of the first and the second highest class probabilities as such confidence measure. Hendrycks et.~al.~\cite{hendrycks2016baseline} showed that simply applying the maximum softmax probability is a good baseline for out-of-distribution detection in deep NNs.

DeVries et.~al.~\cite{devries2018learning} train a NN with an additional output unit responsible for confidence estimation. Liang et.~al.~\cite{liang2017enhancing} propose to detect out-of-distribution examples in a pretrained model based on temperature scaling and input preprocessing, where small perturbations are added to an input to boost prediction confidence before inference. Their method assumes access to a representative set for out-of-distribution data. As an example of a metric-based out-of-distribution detection method, in \cite{lee2018simple} each class is modeled by a Gaussian distribution with a joint covariance matrix. Later Mahalanobis distance is applied to detect out-of-distribution and adversarial examples. Finally, Hendrycks et.~al.~\cite{hendrycks2018deep} propose to expose the model with auxiliary data during training to minimize the model's confidence (a uniform output) on the extra examples. Their work can be thought of as an application of universum examples \cite{zhang2017universum} for out-of-distribution detection.

Most of these methods model out-of-distribution data by an auxiliary dataset, which effectively reduces out-of-distribution detection to a binary classification problem. By contrast, in this paper, we model out-of-distribution data as a complement of in-distribution data and assume no access to auxiliary data.  

Confidence values can be interpreted as the inverse of uncertainty values, which are not restricted to the classification problem. In the context of deep models, \cite{galuncertainty} derive a connection between Bayesian inference in Gaussian processes and neural networks that apply dropout which enables them to model uncertainty. Lakshminarayanan et.~al.~\cite{lakshminarayanan2017simple} estimate uncertainty statistics from an ensemble of classifiers and subsequently detect out-of-distribution examples. Unlike most methods, they showed their technique is scalable by applying it to the ImageNet-2012 dataset. Our proposed fitted ensemble improves upon this approach by adding a new dimension to regular ensemble classifiers that rectify their predictive confidence by reformulating the original classification problem and lowering conflict between different components.

\section{Separable Concept Learning}

Recall that inductive biases are the assumptions that the model have about the unseen instances. For example, some of the common biases in machine learning are:
\begin{itemize}
\item minimum cross-validation error,
\item maximum margin,
\item nearest neighbors, and
\item minimum description length (Occam's razor).
\end{itemize}
To mitigate overgeneralization, in this section we introduce another bias, called \emph{separable concept learning} (SCL). In SCL we assume a combined classifier should work well if multiple instances of it are trained on disjoint subsets of classes (a mathematical partition of the set of classes).

Here we assume evaluation of a classifier consists of two components: model performance on instance $\mathbf{x}$ when (1) $\mathbf{x} \sim D^*(\mathbf{x}|y\in Y)$, and (2) when $\mathbf{x} \sim D^*(\mathbf{x}|y\in Y^c)$, where $D^*$ is the data distribution over similar data as in the original data distribution $D$ (including $D$), and $Y^c$ denotes the complement of set of classes $Y$. We call the second component the \emph{inhibition ability} of the model because it must inhibit its tendency to activate any output class. While the first component can be evaluated by any classification metric, e.g.\ classification accuracy, to also account for the inhibition ability we apply separable concept learning (SCL).

The advantage of SCL over previous evaluation procedures applied in out-of-distribution detection tasks is that it does not rely on any threshold and measures the rejection ability of the model implicitly without requiring auxiliary data to simulate $D^*$. This is an important property because, in light of no free lunch \cite{wolpert1997no}, current evaluation techniques are highly dependent on the choice of this auxiliary set. The SCL provides a more natural and arguably useful way to measure the inhibition ability of learning models.

Intuitively, a set of classifiers should be able to learn different concepts separately from each other because they are aware of the limits of their knowledge. As a result, these models can learn new non-overlapping concepts simply by accepting new output nodes from a separately trained model. In other words, it is possible to train several classifiers over different concepts and merge them to get a single model over the union of concepts. In this way, SCL is simulating the way learning models can handle new emerging relevant concepts and work together.

A closely related subject to SCL is \emph{incremental concept learning} (ICL), where concepts are presented to the learning model incrementally \cite{ourarxiv}. The SCL can be conceived as the extreme case of ICL where after learning a few concepts their training instances diminish from the memory. Interestingly, ICL and SCL are more reminiscent of human learning than the current full concept learning approach. Furthermore, in most realistic applications the size of $Y$ is unknown at the beginning or could grow over time, which makes ICL a practically important subject in supervised learning.

Formally, to evaluate the inhibition ability of a model, let $\mathrm{concat}$ be the concatenation operator that merges class scores according to a predefined order over the set of classes $Y$ and then normalizes them, $l(.,.)$ be a loss function, and $\rho$ be a partition of $Y$ such that every set in $\rho$ (part of the partition) is at least of size $2$ (contains at least two classes from $Y$). The \emph{empirical separable risk} of a classifier $g$ with respect to $\rho$ is then defined as:
\begin{equation} \label{metric}
r_{\rho}(g)=\frac{1}{n}\sum_{i=1}^{n}l(\mathrm{concat}_{\forall \rho_k \in \rho}(g_k(\mathbf{x_i})),y_i),
\end{equation}
where $g_k$ is an instance of $g$ trained on $\rho_k$ (for convenience assume members of $\rho$ are indexed by $k$), i.e.\ a learning model trained on the subset of training set samples whose classes are only included in $\rho_k$. 

\begin{definition}{\textbf{SCL Accuracy}.}
The SCL accuracy is calculated according to equation~\ref{metric}, where the predicted class is chosen according to the optimal decision rule\cite{bishop2006pattern} that predicts the class with the highest probability, and loss function is the indicator function.
\end{definition}
Note that in the case of $\rho=\{\{Y\}\}$, SCL accuracy will reduce to the classification accuracy measure. More generally: 

\begin{prop}\label{prop1}

The SCL accuracy is a lower bound for the average classification accuracy of the component classifiers $g_k$ when the test set is partitioned according to $\rho$ and each $g_k$ is evaluated on its corresponding set, i.e.\  
\begin{equation}\label{bound}
r_{\rho}(g)\leq \frac{1}{n}\sum_{i=1}^{n}l(g_{k:y_i\in\rho_k}(\mathbf{x_i}),y_i).    
\end{equation}

\end{prop}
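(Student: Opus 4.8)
The plan is to prove the stronger, pointwise statement that for every test example $(\mathbf{x_i},y_i)$ the $i$-th summand on the left of (\ref{bound}) is at most the $i$-th summand on the right, and then sum over $i=1,\dots,n$ and divide by $n$. Since the SCL accuracy uses the indicator of a \emph{correct} prediction as its loss $l$ (so that both sides of (\ref{bound}) are accuracies and each summand lies in $\{0,1\}$), and the predicted class is the $\arg\max$ of the relevant score vector, this pointwise inequality is equivalent to the implication: \emph{if the concatenated classifier predicts $y_i$ correctly on $\mathbf{x_i}$, then the single component $g_k$ with $y_i\in\rho_k$ also predicts $y_i$ correctly on $\mathbf{x_i}$.} Establishing this implication is the whole content of the proposition; the rest is bookkeeping with a double sum.

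To prove the implication, fix $i$ and let $k$ be the unique index with $y_i\in\rho_k$ --- unique because $\rho$ is a partition of $Y$. Write $\mathbf{p}=\mathrm{concat}_{\forall\rho_j\in\rho}(g_j(\mathbf{x_i}))$ for brevity. By definition, $\mathbf{p}$ is the vector indexed by $Y$ obtained by placing $g_j(\mathbf{x_i})$ in the block of coordinates corresponding to $\rho_j$ for each $j$ and then rescaling the whole vector by one common positive constant (namely $1/|\rho|$ when each $g_j$ already outputs a probability vector, though any common positive factor would do). Such a rescaling is order-preserving on the coordinates, hence in particular order-preserving within the block $\rho_k$, so $\arg\max_{y\in\rho_k}[\mathbf{p}]_y$ coincides with $\arg\max_{y\in\rho_k}[g_k(\mathbf{x_i})]_y$, i.e.\ with the class $g_k$ itself would output. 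Now assume the concatenated classifier is correct, $y_i\in\arg\max_{y\in Y}[\mathbf{p}]_y$. Since $y_i\in\rho_k\subseteq Y$, the coordinate $y_i$ is also maximal over the sub-block $\rho_k$, so $y_i\in\arg\max_{y\in\rho_k}[\mathbf{p}]_y=\arg\max_{y\in\rho_k}[g_k(\mathbf{x_i})]_y$; that is, $g_k$ predicts $y_i$. Summing the resulting pointwise inequality $l(\mathbf{p},y_i)\leq l(g_{k:y_i\in\rho_k}(\mathbf{x_i}),y_i)$ over all test points and dividing by $n$ yields (\ref{bound}), whose right-hand side is precisely the accuracy obtained by scoring each test point with the component responsible for its label --- equivalently the test-fraction-weighted average of the component accuracies when the test set is split along $\rho$.

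I expect the main subtlety to be the handling of ties in the $\arg\max$, together with being explicit about what the normalization step inside $\mathrm{concat}$ does. The cleanest route is to read ``predicts $y_i$ correctly'' as ``$y_i$ belongs to the $\arg\max$ set'', since membership in the $\arg\max$ is preserved both under restriction to a sub-block and under a common positive rescaling, so the chain above goes through verbatim; if instead one commits to a fixed deterministic tie-break over $Y$, one has to observe that the same ordering, restricted to $\rho_k$, is the one used by $g_k$, which again keeps the implication valid. Everything else --- the uniqueness of $k$, the monotonicity observation, and the rearrangement of the double sum --- is routine.
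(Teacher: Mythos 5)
Your proof is correct and follows essentially the same route as the paper's: the normalization inside $\mathrm{concat}$ is order-preserving, and if $y_i$ attains the maximum over all of $Y$ in the concatenated vector then it certainly attains the maximum over the sub-block $\rho_k$, so correctness of the merged classifier implies correctness of the responsible component $g_k$; summing the pointwise inequality gives the bound. Your treatment is somewhat more careful than the paper's (explicit handling of ties and of the identification of the sub-block ordering with $g_k$'s own ordering), but the underlying argument is identical.
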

\begin{proof}
The final normalization of the $\mathrm{concat}$ operator does not change the order of class probabilities, thus it does not affect the final classification decision and can be ignored in our analysis. To assign test example $x_i$ to its actual label $y_i$, its corresponding probability $p(y_i|x_i)$ should be maximum. Because $p(y_i|x_i)$ is assigned by the $g_k$ that is trained on this class, i.e.\ $y_i\in\rho_k$, and concatenation cannot increase this value, therefore $\mathrm{concat}$ can only either decrease the accuracy or preserve it.
\end{proof}

While SCL accuracy is a simple measure to compute classification accuracy in presence of out-of-distribution examples, equation~\ref{bound} provides a way to measure the exact amount of accuracy reduction in SCL tasks that occur due to the existence of out-of-distribution data.
Abstractly, while comparing two learning models, $M_1$ is strictly better than $M_2$ if $\forall \rho: r_{\rho}(M_1)\leq r_{\rho}(M_2)$. However, the possible ways of partitioning $Y$ grows exponentially with the size of $Y$. Thus in practice, an approximation is used by sampling from all the possible partitions.

Informally, to apply equation \ref{metric}, one can train several components of a SCL model on different subsets of the set of classes and measure the performance of the whole (merged) model over the test set. This approach provides a simple procedure to evaluate the inhibition ability of different learning models. As we will later see, fitted ensembles can significantly boost performance over conventional NNs on SCL tasks.

\section{Fitted Ensembles}
Lakshminarayanan ~et.~al.~\cite{lakshminarayanan2017simple} showed that applying conventional ensembles is more effective than MC-dropout for detecting out-of-distribution examples. They showed this approach scales well to higher-dimensional spaces while at the same time improving the overall classification accuracy. In the out-of-distribution experiment section we will show that:
\begin{itemize}
    \item conventional ensemble techniques fail to reliably detect many out-of-distribution examples on popular datasets, and
    \item additionally, ensemble methods suffer from overconfident predictions.
\end{itemize}  

Our proposed model addresses both of these drawbacks and adjusts the class probabilities directly. Moreover, fitted ensembles do not sacrifice classification accuracy to reduce overgeneralization.

The main idea is to diversify the induced feature space by changing the original classification task. This is achieved by deriving new classification problems from the original problem. We extract new classification problems by combining original classes into larger superclasses to form new \textbf{superclass spaces}. A set of one or more such superclass spaces is referred to as a \textbf{sequel}. A fitted ensemble is a non-empty set of sequels.

To provide more clarity on sequels, consider a four-class classification problem with output space $Y=\{A,B,C,D\}$. We construct a sequel $H$ consisting of two superclass spaces $\{h^0,h^1\}$ in the following manner, $H=\{h^0,h^1\},$ where,
\begin{center}
    $h^0=\{h^0_0,h^0_1\},$
    $h^1=\{h^1_0,h^1_1\},$ 
\end{center}
\begin{center}
    $h^0_0=\{A,D\},$
    $h^0_1=\{B,C\},$ 
\end{center}
\begin{center}
    $h^1_0=\{A,B\},$ 
    $h^1_1=\{C,D\}.$
\end{center}

Here, each superclass ($h^0_0, h^0_1, h^1_0,$ and $h^1_1$) includes two classes from the original problem. In practice, we simplify sequels by assuming that each superclass space $h^j$ is a partition of set of original classes. A sequel consists of only those superclass spaces that can solve the original classification problem. In the sequel $H$ discussed earlier, superclass space $h^0$ can only inform that an example belongs to either $\{A,D\}$ or $\{B,C\}$, but not to one particular class. With the addition of space $h^1$, we can deduce to which original class the example belongs and solve the original classification problem. Essentially, a sequel is a conceptual means to construct better superclass spaces.

More formally, assume output space $Y$ consists of $n$ categories $y_i$, where $i\in\{1 .. n\}$. We map $Y$ into $r$ new superclass spaces $h^j$ each with $|h^j|\leq n$ elements. In this new space, $\forall ij: h^j_i\subset Y$ and $\forall i\neq k: h^j_i\cap h^j_k=\emptyset$. As mentioned earlier, each superclass space sets up a new classification problem. And, for each such space we have its corresponding member-classifier. To put it simply, a fitted ensemble model consists of \emph{r} member-classifiers \emph{g$^{j}$} \emph{j} $\in$ \{1,...,\emph{r}\}, where \emph{g$^{j}$} is trained on $h^j$.  
Each superclass space $h^j$ defines an upper bound on the predictive confidence of the model. Given an example $x$, a class probability cannot be greater than the probability of any superclasses it belongs to, i.e. $\forall y_i\in h^j_k:p(y_i|x)\leq p(h^j_k|x)$. During inference, each \emph{g$^{j}$} corrects the current confidence about class probabilities by adjusting the estimated probability values for each class in a fitted ensemble. In particular, the class probabilities should satisfy all the inequalities in the form of $\widehat{p(y_i|x)}\leq g^j_k(x)$, where $y_i\in h^j_k$, and $\widehat{p(y_i|x)}$ denotes the estimation of $p(y_i|x)$ by the model. In this manner, a fitted ensemble can indirectly make inferences about OOD examples by deducing that the class probabilities are closer to zero for unknown examples. Algorithms 1 summarizes the necessary steps for constructing and training a fitted ensemble. The procedure for rectifying class probabilities during inference phase is outlined in algorithm 2.  

\begin{table}[t]
\centering
\begin{tabular}{l}
\hline
\multicolumn{1}{c}{\textbf{\begin{tabular}[c]{@{}c@{}}Algorithm 1: Fitted Ensemble Construction\end{tabular}}} \\ \hline
\begin{tabular}[c]{@{}l@{}} \textbf{Input} 
Dataset $\mathbf{D}$ with original classes $Y$          \\
number of sequels $m$   \\
size of superclass spaces for each sequel $\mathbf{s_i}$ \\\end{tabular}, \\
set of superclass members $H$\\
set of member-classifiers $G$\\
1. {\bfseries set} $H$ and $G$ to $\emptyset$ \\
2. \bfseries{for} $i=1$ {\bfseries to} $m$          \\
3. \hspace{15pt}\bfseries{for} $j=1$ {\bfseries to} $\mathbf{s}_i$          \\
4. \hspace{30pt} randomly partition $Y$ into $|Y|/\mathbf{s}_i$ superclasses\\   
5. \hspace{30pt} append this partition to $H$ \\
6. \hspace{30pt} create a new dataset with the new superclass labels \\ 
7. \hspace{30pt} train a member-classifier $g$ on the new dataset \\ 
8. \hspace{30pt} append $g$ to $G$ \\
9. \hspace{15pt}\bfseries{end for}      \\
10. \bfseries{end for}     \\
13. Return $G$ and $H$    \\ \hline
\end{tabular}
\end{table}

\begin{table}[t]
\centering
\begin{tabular}{l}
\hline
\multicolumn{1}{c}{\textbf{\begin{tabular}[c]{@{}c@{}}Algorithm 2: Probability rectification \\ in Fitted Ensemble\end{tabular}}} \\ \hline
\begin{tabular}[c]{@{}l@{}} \textbf{Input} test example $\mathbf{x}$\\ member-classifiers $G$\\ set of superclass members $H$\\ total number of classes $n$\end{tabular} \\
1. \bfseries{for} $i=1$ {\bfseries to} $n$                 \\
2. \hspace{15pt}Initialize $p(y_i|\mathbf{x}) = 1$ \\
3. \bfseries{end for}   \\
4. \bfseries{for} $i=1$ {\bfseries to} $|G|$          \\
5. \hspace{15pt}{\bfseries set} $G^i$ to $\mathbf{g}$ \\
6. \hspace{15pt}calculate probability vector $\mathbf{g}(\mathbf{x})$                \\
7. \hspace{15pt}\bfseries{for} $j=1$ {\bfseries to} $|\mathbf{g}(\mathbf{x})|$   \\
8. \hspace{30pt} {\bfseries for} all class indices $s$ in superclass $H^i_j$                    \\
9. \hspace{45pt}$p(y_s|\mathbf{x})$ = $min(p(y_s|\mathbf{x}), g_j(\mathbf{x}))$         \\
10. \hspace{30pt}\bfseries{end for}                                       \\
11. \hspace{15pt}\bfseries{end for}                                   \\
12. \bfseries{end for}                           \\
13. Return class probability vector $\mathbf{p}(\mathbf{y}|\mathbf{x})$.    \\ \hline
\end{tabular}
\label{calib_alg}
\end{table}

\section{FITTED ENSEMBLE EXPERIMENTS}

In this section the MNIST, CIFAR-10/100, SVHN, and ImageNet-2012 datasets support a variety of experiments that assess the performance of fitted ensembles. The general setup is to train a model on a dataset that is representative of the true data distribution, and then evaluate the confidence of the model predictions on both unseen and in-distribution classes. Following \cite{lakshminarayanan2017simple,sastry2020detecting} and unlike many new out-of-distribution modeling methods, we assume the model has no access to any out-of-distribution data, neither directly as in \cite{liang2017enhancing,ApoorvECCV,lee2018simple}, nor indirectly as in \cite{hendrycks2016baseline,hendrycks2018deep,chen2020robust}. Hence, it is not comparable to these methods. Therefore, we focus our comparison to conventional ensembles as an established strong method \cite{Ovadia} to address out-of-distribution problem.

Following \cite{lakshminarayanan2017simple}, we plot histograms of different confidence values, where a good model should be less confident about out-of-distribution examples. Moreover, a model's confidence should decrease for in-distribution examples that it misclassifies. The results suggest fitted ensembles are indeed successful at both of these tasks and significantly improve on conventional ensembles in recognizing examples from unfamiliar classes.  

For consistency following \cite{lakshminarayanan2017simple}, all the models applied in these experiments follow a small VGG-like \cite{simonyan2014very}
architecture for each classification/member problem, except for ImageNet experiment in which we use DenseNet \cite{huang2017densely} 
architecture to show the results are generalizable to different architectures. 

\subsection{The Set of Sequels}
Except for in the ImageNet experiment, each fitted ensemble includes four extra superclass spaces (two sequels) in addition to the regular classification problem. Two of these spaces are created by combining every two consecutive classes from the original dataset, starting from $0$ and $1$, respectively, to form superclasses. For datasets with $10$ classes that means $H=\{h^0, h^1\}$, where $h^0=\{ h^0_i=\{y_{2i},y_{1+2i}\}:i\in \{0..4\}\}$, and $h^1=\{ h^1_i=\{y_{1+2i},y_{(2+2i)\textrm{mod} 10}\}:i\in \{0..4\}\}$. 

The next two superclass spaces are created similarly, except that we combine every other class. These four superclass spaces add $20$ constraints to adjust the primary probability values (four for each class). In the ImageNet experiment, in addition to the regular classification problem, the fitted ensemble includes two extra superclass spaces (one sequel), each of which is constructed by combining every two consecutive classes (according to the PyTorch\footnote{https://pytorch.org/} framework's default class ordering) from the original dataset.

To confirm that adding superclass spaces does not negatively affect classification performance, first we trained an ensemble of five CNNs and compared its performance to an ensemble of five fitted ensembles.
Note that each fitted ensemble contains the five member networks as just explained.  While the aggregate of fitted ensembles takes more space, the question is whether that additional space affords superior detection of out-of-distribution examples without loss of in-distribution accuracy.

For the aggregate of fitted ensembles, we combined the results of the five different fitted ensembles at the superclass spaces level. That is, Algorithm~2 is applied after combining outputs of all corresponding member-classifiers. The combination is always a weighted average with uniform weights.

Table~\ref{acc-table} 
depicts the classification accuracy of each of the two ensembles when run once on a variety of datasets. The table shows that the aggregate of fitted ensembles successfully maintains (or improves) the classification performance of its base classifiers after adding sequels. Note that we utilize small network architectures (wider architectures can improve accuracy on CIFAR-100).

\begin{table}[t]
\caption{Classification accuracy of ensembles of five CNNs and fitted ensembles on various data sets.}
\label{acc-table}
\vskip 0.15in
\begin{center}
\begin{small}
\begin{sc}
\begin{tabular}{lcccr}
\hline
Data set & CNNs & Fitted Ensembles  \\
\hline
MNIST    & 99.65& 99.70\\
SVHN    & 96.95& 97.10\\
CIFAR-10    & 93.1& 93.4 \\
CIFAR-100    & 69.15& 69.85 \\

\hline
\end{tabular}
\end{sc}
\end{small}
\end{center}
\vskip -0.1in
\end{table}

\subsection{Confidence Consistency Experiments}
In the rest of the experiments of this section, we evaluate the out-of-distribution recognition and overconfidence reduction ability of fitted ensemble. In these experiments, we apply an ensemble of five fitted ensembles (each with five superclass spaces) and compare its performance with an ensemble of $50$ CNNs, which is twice as many neural networks as the aggregate of fitted ensembles. \emph{A large number of neural networks in regular ensembles ($50$) rules out any further improvement of confidence values because of conventional ensembling.}
Because \cite{lakshminarayanan2017simple} have already shown that ensembles significantly outperform MC-dropout for uncertainty estimation we did not include MC-dropout results in these experiments.

All the experiments in this section include a VGG-like architecture for neural networks. This architecture includes nine ($3\times 3$) convolutional layers each with $100$ output feature maps, followed by two fully-connected layers (of size $1,000$) and a softmax layer. All layers are followed by ReLU non-linearity and an additional batch-normalization layer. Furthermore, there is a max-pooling layer, after three consecutive convolutional layers. 
The weight initialization follows \cite{he2015delving} for convolutional layers and \cite{glorot2010understanding} for linear layers.

The models are trained by Adam optimizer \cite{kingma2014adam} with a batch size of $40$ in all our experiments. The learning rate always starts at $0.01$ and is decayed by multiplying by $0.4$ after every $10$ epoch. All the models are trained for $52$ epochs. Even though the hyperparameters are not optimized for any specific dataset, we found our setup generally robust on all the four datasets studied in this work.

The loss function is always cross-entropy and no dropout or any other regularization, except for batch-normalization, was applied. For preprocessing all the datasets were normalized.  

A light data augmentation was performed on each dataset. For MNIST and SVHN dataset, the images are first zero padded to obtain $34\times 34$ resolution images and then randomly cropped to get to $32\times32$ images.
The CIFAR-10/100 datasets are first padded to $36\times 36$ images and then randomly cropped to obtain $32\times 32$ images. The training images in these datasets are also randomly horizontally flipped.

\subsubsection{MNIST Experiment}
The MNIST dataset consists of gray-scale images of digits and is separated into $60$,$000$ training and $10$,$000$ testing images. First an ensemble of $50$ regular CNNs and five fitted ensembles ($25$ CNNs) were trained on MNIST.

For out-of-distribution data we applied notMNIST\footnote{Available at http://yaroslavvb.blogspot.co.uk/2011/09/notmnist-dataset.html}, a dataset similar to MNIST but consisting of alphabet letters.

Due to space limitations, we exclude the graph of this experiment. Nevertheless, we observe a substantial decrease of highly-confident out-of-distribution examples in the aggregate of fitted ensembles (from about 34\% down to about 15.5\%), which is evidence of the effectiveness of our approach. Furthermore, there is a large amount of highly-confident examples from the notMNIST dataset in the CNN ensemble that showcases the ubiquity of the overgeneralization (all $50$ CNN models are over 95\% confident about 34\% of out-of-distribution examples). 

It is also noteworthy that a considerable amount (about 15\%) of out-of-distribution examples have been correctly rejected by the fitted ensemble aggregate ensemble, whereas the CNN ensemble is never totally unsure (considering that random probability is 10\%) about any out-of-distribution example.

\subsubsection{SVHN Experiments}
This next experiment evaluates fitted ensembles on the SVHN dataset of colorful digit images. Higher noise in examples and greater input space makes this dataset more challenging than MNIST. The test set of the CIFAR-10/100 dataset serves as the out-of-distribution data. 

\begin{figure*}[!ht]
\vskip 0.2in
\centering
\subfloat[50 CNNs]{

\includegraphics[height=1.8in]{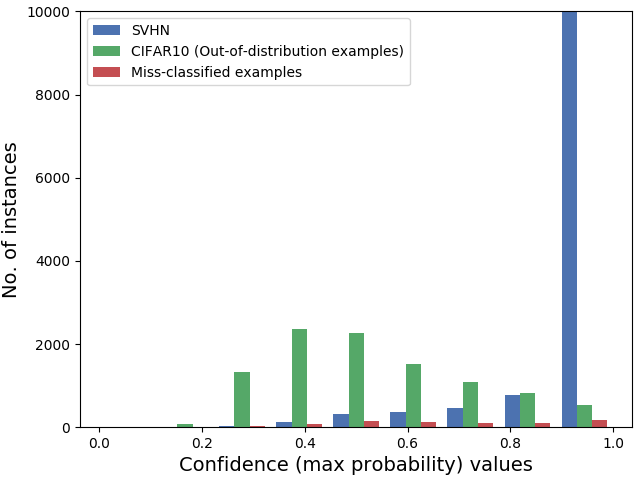}}
\qquad
\subfloat[five fitted ensembles]{

\includegraphics[height=1.8in]{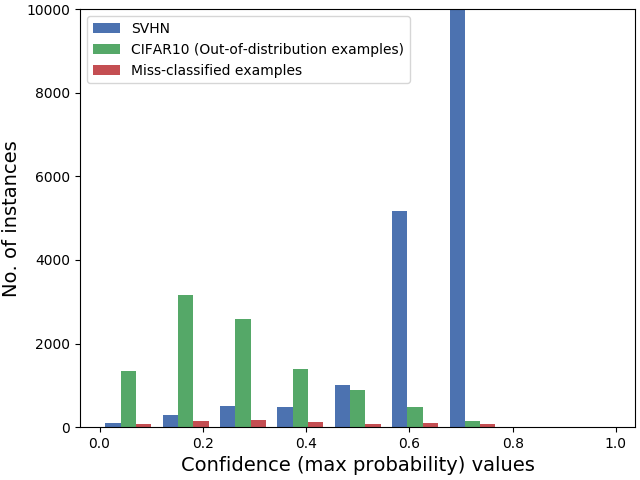}}%
\caption{\textbf{Distribution of confidence values for SVHN (in-distribution) and CIFAR-10 (out-of-distribution) examples.} (a) the result from $50$ CNNs of the ensemble show the limit of ensembling for improving confidence values. (b) an aggregate of five fitted ensembles significantly outperform the regular ensemble. Note that some of the bars go above the $10$,$000$ upper bound of the graph because there are $26$,$032$ test examples in SVHN. }
\label{svhn_sag}
\vskip -0.2in
\end{figure*}

Considering the CIFAR-10 test set as out-of-distribution data, histograms of confidence values are depicted in figure~\ref{svhn_sag} for an ensemble of (a) $50$ CNNs (b) five fitted ensembles. Note that there are $26$,$032$ test examples in SVHN (some of the bars go above $10$,$000$ examples). The plots illustrate that both ensembles are successful in this task. However, the aggregate of fitted ensembles rejects more out-of-distribution examples and adjusts the confidence values more accurately.

In contrast, the ensemble of CNNs assigns high-confidence values to many out-of-distribution examples and is very confident about a significant proportion of misclassified examples. 
A similar result is observed when applying CIFAR-100 as an out-of-distribution dataset (graphs are excluded due to space constraints). The results reaffirm that the aggregate of fitted ensembles performs better in out-of-distribution recognition.

\subsubsection{CIFAR-10 and CIFAR-100 Experiments}
CIFAR-10/100 are datasets of tiny images of $10$ and $100$ different types of animals and moving objects, respectively. The datasets are considerably more challenging than MNIST and SVHN because of the greater variety of images. We observe similar trends when we compare an ensemble of $50$ regular CNNs versus five fitted ensembles trained on these datasets and treat SVHN and the remaining CIFAR-10/100 as out-of-distribution data. However, due to space constraints, we only include the result for models trained on CIFAR-10 compared against CIFAR-100 data as unseen classes in Figure~\ref{cifar10_sag}. 

Overall, the results exhibit again the superior performance of fitted ensembles in assigning low confidence values or rejecting out-of-distribution data on out-of-distribution data. Furthermore, fitted ensembles assign lower confidence values to misclassified examples.

\begin{figure*}[!ht]
\vskip 0.2in
\centering
\subfloat[50 CNNs]{%
\includegraphics[height=1.8in]{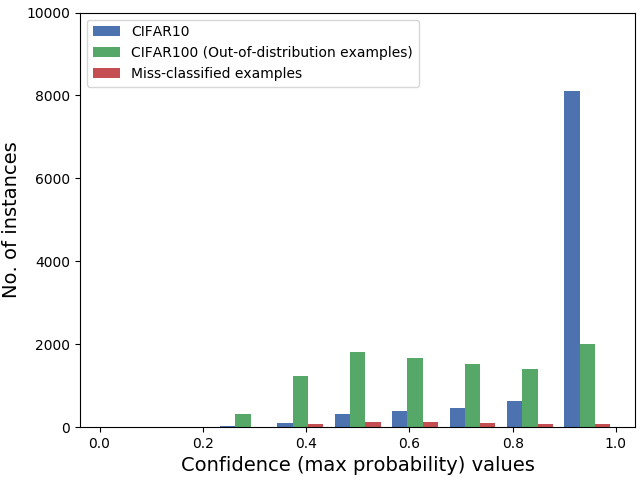}}
\qquad
\subfloat[five fitted ensembles]{%
\includegraphics[height=1.8in]{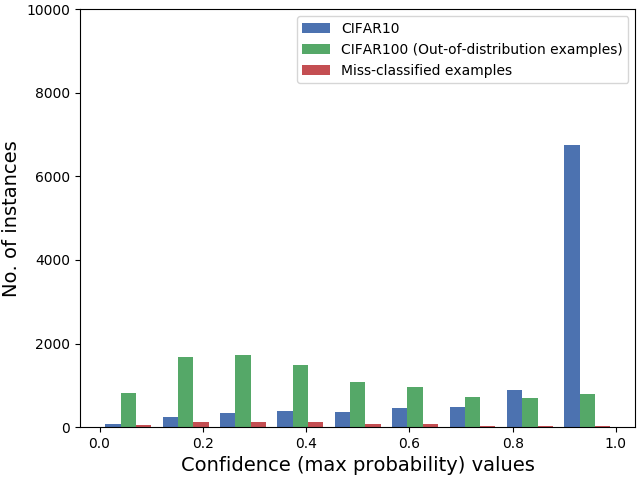}}%
\caption{\textbf{Distribution of confidence values for CIFAR-10 (in-distribution) and CIFAR-100 (out-of-distribution) examples for an ensemble of (a) $50$ CNNs and (b) five fitted ensembles.}
}
\label{cifar10_sag}
\vskip 0.2in
\end{figure*}

\subsection{SCL Experiments}
To demonstrate the performance of fitted ensemble on SCL we evaluated the accuracy of an ensemble of three fitted ensembles versus an ensemble of three conventional CNNs. We applied the same datasets, neural network architecture, and training procedure from the past experiments for all neural networks. 
However, we applied four superclass spaces (including the original classification problem) for each fitted ensemble. The set of sequels for fitted ensembles are changed to conform with five total number of classes. 

In addition to regular classification problem, we added three extra constraint-problems for MNIST, SVHN and CIFAR-10 datasets, as follows:
$H=\{h^0=\{h^0_0=\{y_0\},h^0_1=\{y_1\},h^0_2=\{y_2\},h^0_3=\{y_3,y_4\}\},h^1=\{h^1_0=\{y_0,y_1\},h^1_1=\{y_2\},h^1_2=\{y_3\},h^1_3=\{y_4\}\},h^2=\{h^2_0=\{y_0\},h^2_1=\{y_1,y_2\},h^2_2=\{y_3\},h^2_3=\{y_4\}\}\}$. Finally, for CIFAR-100 with $50$ classes in the SCL task, we applied the same four superclass spaces as in the out-of-distribution examples experiments.

For SCL, each training set was split into two sub-training sets. The split ensures that the first sub-training set includes all the examples from the first half of classes from the original training set, and the second sub-training set includes the rest. This split is arbitrary because the set of classes are independent in each dataset.

After training two classifiers on each dataset, the classifiers are merged and applied to classify the test data. Table~\ref{dcl-table} summarizes the classification accuracy results with this procedure on different datasets. Each experiment is repeated five times except for CIFAR-100, which is a single run. The results show that the aggregate of fitted ensembles significantly ($p<0.01$ based on the Wilcoxon test) outperforms conventional ensembles on each SCL task with multiple runs (MNIST, SVHN, and CIFAR-10).

\begin{table}[t]
\caption{SCL accuracy of ensembles of CNNs and fitted ensembles on various data sets (mean and standard deviation is from five runs). Aggregate of fitted ensembles lead to significantly better classification accuracy in the presence of unseen classes.}
\label{dcl-table}
\vskip 0.15in
\begin{center}
\begin{small}
\begin{sc}
\begin{tabular}{lcccr}
\hline
Data set & CNNs (std) & fitted ensembles (std) &  \\
\hline
MNIST    & 98.27 (0.17)& \textbf{98.70 (0.08)}\\
SVHN    & 93.11 (0.09)& \textbf{94.01 (0.08)}\\
CIFAR10    & 82.39 (0.36)& \textbf{83.45 (0.19)} \\
CIFAR100    & 60.80 & 62.80 \\

\hline
\end{tabular}
\end{sc}
\end{small}
\end{center}
\vskip -0.1in
\end{table}

\subsection{ImageNet Experiments}
ImageNet-2012 dataset consists of about $1.2$ million images with $1$,$000$ classes that were originally used for a vision competition in 2012. All the images in the dataset are first resized to $299\times299$ images and normalized. Then, each image is padded by $15$ pixels on each side, after which a random crop of size $299\times299$ is extracted for data augmentation. The test and out-of-distribution data are resized to $299\times299$ as well and then normalized by the same normalization parameters as training data.

To our knowledge, the ensemble baseline of \cite{lakshminarayanan2017simple} is the only technique that can scale up to ImageNet for out-of-distribution detection. To show that fitted ensembles are also scalable to large-scale real-world problems we trained a fitted ensemble consisting of three member-classifiers on ImageNet-2012. The first two constraint-problems are formed by merging every two consecutive classes from the total $1$,$000$ classes of ImageNet. That is, the process merges the first and second classes, third and fourth classes, etc., to form the first superclass space, and then merges second and third classes, fourth and fifth classes, etc to construct the second superclass space. Note that these two superclass spaces define a sufficient set of constraints to make a classifier over the entire $1$,$000$ classes (one sequel). In this section, we call such a combined classifier, $classifier 1$.

The last sequel is simply the original dataset with the same $1$,$000$ classes, and, for simplicity, we refer to its corresponding member-classifier as $classifier 2$. All the member-classifiers are neural networks with DenseNet-BC \cite{huang2017densely} architecture with growth rate of $64$, $16$ initial features (with no additional layers before entering the first block) and block configurations of sizes $6$, $6$, $12$, $12$, respectively. 

Each network is trained for $40$ epochs, where in the first $20$ epoch the learning rate is increased by a factor of $1.25$ after each epoch and is decayed by a factor of $0.8$ after each epoch during the rest of the training. The initial learning rate is $0.001$.

The optimizer is SGD with Nesterov momentum, batch size of $64$, the momentum of $0.9$, and weight decay of $10^{-4}$. 
In the interest of time, we did not optimize the architecture or training procedure because the goal of this experiment is not to improve classification accuracy but rather the consistency of predictive confidence values.

\subsubsection{Results}
To confirm that fitted ensembles can adjust predictive confidences more accurately we have calculated the average predictive confidence (maximum activation) for all correctly classified and misclassified examples for $classifier 1$, $classifier 2$, and the whole fitted ensemble, along with the classification accuracy of each. Furthermore, to compare the performance of the fitted ensemble with a conventional ensemble we trained a regular ensemble consisting of two models (comparable to two sequels) following the same architecture, data preprocessing/augmentation, and optimization procedure as the networks of the fitted ensemble.   

Table~\ref{pcc-table} summarizes the confidence statistics of $classifier 1$, $classifier 2$, the overall fitted ensemble (denoted by f-ensemble in the table), and the conventional ensemble. The overall result is more consistent predictive confident values in the fitted ensemble, where the average predictive confidence for misclassified examples is about $23\%$ better (lower) compared to the conventional ensemble, while the average confidence of the correctly classified examples is about $11\%$ lower. 

\begin{table*}[t]
\caption{Average predictive confidence in ImageNet experiment. Better predictive confidence consistency is seen in fitted model through its lower predictive confidence of misclassified examples while improving accuracy of member classifiers.}
\label{pcc-table}
\vskip 0.15in
\begin{center}
\begin{small}
\begin{sc}
\begin{tabular}{lcccr}
\hline
 metric & classifier 1 & classifier 2 & f-ensembles & ensemble  \\
\hline
 Avg. miss-prediction conf.& 38.79(30.10)& 57.83(24.26)&29.95(28.54)&52.94(23.28)\\
Avg. correct prediction conf.& 80.58(27.43)& 88.09(17.94)&74.66(31.11)&85.80(19.22)\\
Avg. total prediction conf.& 70.11& 80.33&64.14&78.03\\
Classification Accuracy& 74.93& 74.36&76.49&76.34\\
\hline
\end{tabular}
\end{sc}
\end{small}
\end{center}
\vskip -0.1in
\end{table*}

In the next experiment, the performance of this fitted ensemble against out-of-distribution data is evaluated. We apply four out-of-distribution datasets. The first two are artificial images generated by sampling from Rademacher and isotropic Gaussian distributions, respectively. The third dataset is Textures\footnote{Available at http://www.robots.ox.ac.uk/~vgg/data/dtd/.}, which consists of $5$,$640$ images of $47$ different type of textures, and the last dataset is SVHN. We use these datasets because they do not include any objects that resemble those in ImageNet.

Table~\ref{ood-table} shows the performance of each model in handling out-of-distribution data. There are three metrics applied in this experiment. The first metric, called FPR-at-$95\%$-TPR \cite{devries2018learning,hendrycks2018deep}, reports a false positive rate with the threshold that results in a $95\%$ true positive rate. Here the in-distribution data are positive and out-of-distribution data are considered negative examples. Intuitively by applying the threshold value on confidence values that $95\%$ of in-distribution examples pass, this metric measures the rate of out-of-distribution examples that are more confident than the least $5\%$ in-distribution examples. 

The second metric is the area under the ROC curve (AUROC), the curve that plots true positive rate against false positive rate for all different threshold values. This holistic metric can be interpreted as the probability that the classifier is more confident about an in-distribution example than an out-of-distribution example. Therefore, the higher AUROC the better. An AUROC of $100\%$ shows that the classifier always detects out-of-distribution data, regardless of the threshold value, and an AUROC of $50\%$ is no better than a random classifier.

The last metric, called detection error, reports the highest possible detection rate if we choose the optimal threshold value \cite{liu2016delving}. Note that these metrics can be sensitive to the relative size of in-distribution versus out-of-distribution sets. For the two artificial datasets, we used the same amount of in- and out-of-distribution data, while for the Textures and SVHN datasets we included all images and all test images, respectively.

The results suggest that the fitted ensemble is overall superior to both of its component classifiers and to the regular ensemble in detecting out-of-distributed examples. On both artificial datasets, the fitted ensemble performs significantly better than the conventional ensemble on both FPR at $95\%$ TPR and AUROC metrics. 

The Texture images are also detected very significantly better by the fitted ensemble than the others. However, the SVHN images are detected significantly better by $classifier 1$, and combining $classifier 1$ by $classifier 2$ slightly degrades the performance of the fitted ensemble. Even so, the fitted ensemble beats the conventional ensemble with a large margin (about $45\%$ improvement by AUROC measure).

Interestingly, the AUROC value for the conventional ensemble, and that of $classifier 2$, are both below $50\%$, implying they both perform worse than a random classifier for detecting SVHN images. In contrast, the high detection rate of SVHN images by $classifier 1$, as evidenced by AUROC, reaffirms that the reformulation of classification problems in fitted ensembles can indeed lead to a novel type of diversity in ensemble techniques, a kind of diversity that is not attainable by regular ensembling.

\begin{table}[t]
\caption{Performance of different models towards out-of-distribution data.}
\label{ood-table}
\vskip 0.15in
\begin{center}
\begin{small}
\begin{sc}
\begin{tabular}{lcccr}
\hline
 ds\& metric & class. 1 & class. 2 & f-ens. & ens.  \\
\hline
FPR at 95\%\\
TPR\\
\hline
Rademacher & 100& 100&49.89&98.71\\
Gaussian & 99.33&100&41.84&99.00 \\
Textures & 74.06&81.88&70.60&80.74 \\
SVHN &  78.53&96.53&84.28&96.75\\
\hline
Area under\\
ROC curve\\
\hline
Rademacher & 77.63& 67.06&94.51&89.90\\
Gaussian & 87.84&66.70&94.89&89.79 \\
Textures & 79.16&72.45&80.84&76.65 \\
SVHN &87.41&40.24&82.97&37.74\\
\hline
Best detection\\
error\\
\hline
Rademacher & 17.26& 22.62&6.90&7.50\\
Gaussian & 11.64&22.65&6.93&7.54 \\
Textures & 10.08&10.14&9.98&10.14 \\
SVHN &18.88&34.23&23.29&34.24\\
\hline
\end{tabular}
\end{sc}
\end{small}
\end{center}
\vskip -0.1in
\end{table}

\subsubsection{Dissimilar-architecture Ensemble Experiment}

In the previous experiment, the conventional ensemble consisted of two CNNs with an identical architecture (similar to network components of the fitted ensemble). In this section, we evaluate the effect of including dissimilar network architectures in an ensemble on improving predictive confidence consistency. To our knowledge, this approach has not been investigated for this problem before. Nevertheless, it is interesting to compare the kind of diversity introduced to an ensemble by a fitted ensemble with the diversity induced by dissimilar network architectures in an ensemble.

To this end, we construct an ensemble by combining two pretrained models on ImageNet, namely DenseNet-121 and DenseNet-161, from the PyTorch framework, and compared them with the fitted ensemble from the previous experiments. Note that the two pretrained models are significantly deeper and more accurate than $classifier 1$ and $classifier 2$ of the fitted ensemble.

The confidence statistics of the fitted ensemble (f-ensemble), and the ensemble of the two pretrained models from PyTorch, are summarized in Table~\ref{pcc2-table}. Interestingly, the result shows more consistent predictive confident values in the fitted ensemble, where the average predictive confidence for misclassified examples is about $10\%$ better (lower) compared to the regular ensemble, while the average confidence of the correctly classified examples is only $1\%$ lower. 

\begin{table}[t]
\caption{Average predictive confidence of fitted ensemble versus regular ensemble. The fitted ensemble makes more consistent predictive confidence value evident by its lower predictive confidence of misclassified examples.}
\label{pcc2-table}
\vskip 0.15in
\begin{center}
\begin{small}
\begin{sc}
\begin{tabular}{lcccr}
\hline
 metric & f-ensembles & ensemble  \\
\hline
Avg. miss-prediction\\ conf.&29.95(28.54)&40.33(22.78)\\
Avg. correct \\prediction conf.& 74.66(31.11)&75.71(23.90)\\
Avg. total \\prediction conf.&64.14&68.11\\
Classification \\Accuracy& 76.49&78.53\\
\hline
\end{tabular}
\end{sc}
\end{small}
\end{center}
\vskip -0.1in
\end{table}

Following the previous section, the next experiment evaluates the performance of the two ensembles while encountering out-of-distribution data. The experimental setup is the same as the out-of-distribution experiment in the previous section.

Table~\ref{ood2-table} depicts the performance of the two ensembles in handling out-of-distribution data. Again the FPR at $95\%$ TPR, area under the ROC curve, and detection error are applied to quantify the detection performance. 

The results suggest that the fitted ensemble is performing better on three of the four benchmarks, namely $Rademacher$, $Gaussian$, and the Textures datasets. However, the dissimilar-architecture ensemble can detect the SVHN images far better than the fitted ensemble. These results suggest that applying different network architectures in an ensemble can be more advantageous than the common similar-architecture practice for detecting out-of-distribution data. Nevertheless, fitted ensembles can enjoy the same kind of diversity to perform even better.

\begin{table}[t]
\caption{Performance of fitted versus regular ensemble towards out-of-distribution data.}
\label{ood2-table}
\vskip 0.15in
\begin{center}
\begin{small}
\begin{sc}
\begin{tabular}{lcccr}
\hline
 dataset\& metric & f-ensemble & ensemble  \\
\hline
FPR at 95\% TPR\\
\hline
Rademacher &49.89&99.64\\
Gaussian & 41.84&98.60 \\
Textures & 70.60&78.32 \\
SVHN &84.28&15.58\\
\hline
Area under ROC curve\\
\hline
Rademacher & 94.51&87.37\\
Gaussian & 94.89&89.75 \\
Textures & 80.84&76.51 \\
SVHN &82.97&97.27\\
\hline
Best detection error\\
\hline
Rademacher & 6.90&12.26\\
Gaussian & 6.93&10.27 \\
Textures & 9.98&10.10 \\
SVHN &23.28&8.13\\
\hline
\end{tabular}
\end{sc}
\end{small}
\end{center}
\vskip -0.1in
\end{table}

\section{Discussion and Conclusions}
This study addresses the confidence inconsistencies that can occur in modern neural network models. We proposed a new framework, called separable concept learning, that generalizes the rigid set up of the conventional classification framework to a more flexible setting that considers out-of-distribution examples. This new framework evaluates classifiers' performance by simulating a system of classifiers that work in harmony to achieve a unified goal, while each classifier may encounter examples of new but relevant classes.

Furthermore, to handle the inconsistent prediction confidence in neural networks we introduced fitted ensembles. Fitted ensembles introduce a new dimension to boost ensembles, where the diversity is preserved by learning different superclasses as output units. Our experiments show that this new reformulation of the original problem can enable fitted ensembles to produce less overconfident predictions and detect even more out-of-distribution examples compared to regular ensembles. Furthermore, our ImageNet experiments with fitted ensembles show scalability of this approach.

Overall, a new method is introduced to rectify neural network predictions and make them more suitable for handling unknown situations, and a new paradigm for evaluation is introduced to reveal their advantages.  The hope is that these contributions can lead to better techniques to eventually make reliable classifiers and pave the way to a more consistent evaluation of them.

\bibliographystyle{IEEEtran}
\bibliography{IEEEabrv,mRef}

\begin{thebibliography}{10}
\providecommand{\url}[1]{#1}
\csname url@samestyle\endcsname
\providecommand{\newblock}{\relax}
\providecommand{\bibinfo}[2]{#2}
\providecommand{\BIBentrySTDinterwordspacing}{\spaceskip=0pt\relax}
\providecommand{\BIBentryALTinterwordstretchfactor}{4}
\providecommand{\BIBentryALTinterwordspacing}{\spaceskip=\fontdimen2\font plus
\BIBentryALTinterwordstretchfactor\fontdimen3\font minus
  \fontdimen4\font\relax}
\providecommand{\BIBforeignlanguage}[2]{{%
\expandafter\ifx\csname l@#1\endcsname\relax
\typeout{** WARNING: IEEEtran.bst: No hyphenation pattern has been}%
\typeout{** loaded for the language `#1'. Using the pattern for}%
\typeout{** the default language instead.}%
\else
\language=\csname l@#1\endcsname
\fi
#2}}
\providecommand{\BIBdecl}{\relax}
\BIBdecl

\bibitem{breiman2017classification}
L.~Breiman, \emph{Classification and regression trees}.\hskip 1em plus 0.5em
  minus 0.4em\relax Routledge, 2017.

\bibitem{DudaHart2nd}
R.~O. Duda, P.~E. Hart, and D.~G. Stork, \emph{Pattern Classification},
  2nd~ed.\hskip 1em plus 0.5em minus 0.4em\relax John Wiley and Sons, 2000.

\bibitem{dietterich1998approximate}
T.~G. Dietterich, ``Approximate statistical tests for comparing supervised
  classification learning algorithms,'' \emph{Neural computation}, vol.~10,
  no.~7, pp. 1895--1923, 1998.

\bibitem{duda1973pattern}
R.~O. Duda, P.~E. Hart, and D.~G. Stork, \emph{Pattern classification and scene
  analysis}.\hskip 1em plus 0.5em minus 0.4em\relax Wiley New York, 1973,
  vol.~3.

\bibitem{cover1967nearest}
T.~M. Cover, P.~E. Hart \emph{et~al.}, ``Nearest neighbor pattern
  classification,'' \emph{IEEE transactions on information theory}, 1967.

\bibitem{fisher1936}
R.~A. Fisher, ``The use of multiple measurements in taxonomic problems,''
  \emph{Annals of eugenics}, vol.~7, no.~2, pp. 179--188, 1936.

\bibitem{hastie}
T.~Hastie, R.~Tibshirani, and J.~Friedman, \emph{The Elements of Statistical
  Learning}, ser. Springer Series in Statistics.\hskip 1em plus 0.5em minus
  0.4em\relax New York, NY, USA: Springer New York Inc., 2001.

\bibitem{fisher1938}
R.~A. Fisher, ``The statistical utilization of multiple measurements,''
  \emph{Annals of eugenics}, vol.~8, no.~4, pp. 376--386, 1938.

\bibitem{vapnik}
V.~Vapnik, \emph{The nature of statistical learning theory}.\hskip 1em plus
  0.5em minus 0.4em\relax Springer Science \& Business Media, 2013.

\bibitem{guo2017calibration}
C.~Guo, G.~Pleiss, Y.~Sun, and K.~Q. Weinberger, ``On calibration of modern
  neural networks,'' in \emph{Proceedings of the 34th International Conference
  on Machine Learning-Volume 70}, 2017, pp. 1321--1330.

\bibitem{MM2020}
J.~Zhang, B.~Kailkhura, and T.~Y. Han, ``Mix-n-match : Ensemble and
  compositional methods for uncertainty calibration in deep learning,'' in
  \emph{Proceedings of the 37th International Conference on Machine Learning,
  {ICML}}, ser. Proceedings of Machine Learning Research, 2020.

\bibitem{lecun1989backpropagation}
Y.~LeCun, B.~Boser, J.~S. Denker, D.~Henderson, R.~E. Howard, W.~Hubbard, and
  L.~D. Jackel, ``Backpropagation applied to handwritten zip code
  recognition,'' \emph{Neural computation}, 1989.

\bibitem{our}
N.~Kardan and K.~O. Stanley, ``Mitigating fooling with competitive overcomplete
  output layer neural networks,'' in \emph{Neural Networks (IJCNN), 2017
  International Joint Conference on}.\hskip 1em plus 0.5em minus 0.4em\relax
  IEEE, 2017, pp. 518--525.

\bibitem{hendrycks2016baseline}
D.~Hendrycks and K.~Gimpel, ``A baseline for detecting misclassified and
  out-of-distribution examples in neural networks,'' \emph{arXiv preprint
  arXiv:1610.02136}, 2016.

\bibitem{liang2017enhancing}
S.~Liang, Y.~Li, and R.~Srikant, ``Enhancing the reliability of
  out-of-distribution image detection in neural networks,'' \emph{arXiv
  preprint arXiv:1706.02690}, 2017.

\bibitem{lee2018simple}
K.~Lee, K.~Lee, H.~Lee, and J.~Shin, ``A simple unified framework for detecting
  out-of-distribution samples and adversarial attacks,'' in \emph{Advances in
  Neural Information Processing Systems}, 2018, pp. 7167--7177.

\bibitem{devries2018learning}
T.~DeVries and G.~W. Taylor, ``Learning confidence for out-of-distribution
  detection in neural networks,'' \emph{arXiv preprint arXiv:1802.04865}, 2018.

\bibitem{hendrycks2018deep}
D.~Hendrycks, M.~Mazeika, and T.~G. Dietterich, ``Deep anomaly detection with
  outlier exposure,'' \emph{arXiv preprint arXiv:1812.04606}, 2018.

\bibitem{ApoorvECCV}
A.~Vyas, N.~Jammalamadaka, X.~Zhu, D.~Das, B.~Kaul, and T.~L. Willke,
  ``Out-of-distribution detection using an ensemble of self supervised
  leave-out classifiers,'' in \emph{Computer Vision - {ECCV} 2018 - 15th
  European Conference}, 2018.

\bibitem{sastry2020detecting}
C.~S. Sastry and S.~Oore, ``Detecting out-of-distribution examples with gram
  matrices,'' in \emph{International Conference on Machine Learning}.\hskip 1em
  plus 0.5em minus 0.4em\relax PMLR, 2020, pp. 8491--8501.

\bibitem{chen2020robust}
J.~Chen, X.~Wu, Y.~Liang, S.~Jha \emph{et~al.}, ``Robust out-of-distribution
  detection in neural networks,'' \emph{arXiv preprint arXiv:2003.09711}, 2020.

\bibitem{Energy2020}
W.~Liu, X.~Wang, J.~D. Owens, and Y.~Li, ``Energy-based out-of-distribution
  detection,'' in \emph{Annual Conference on Neural Information Processing
  Systems, NeurIPS 2020}.

\bibitem{galuncertainty}
Y.~Gal, ``Uncertainty in deep learning,'' Ph.D. dissertation, PhD thesis,
  University of Cambridge, 2016.

\bibitem{Ovadia}
J.~Snoek, Y.~Ovadia, E.~Fertig, B.~Lakshminarayanan, S.~Nowozin, D.~Sculley,
  J.~V. Dillon, J.~Ren, and Z.~Nado, ``Can you trust your model's uncertainty?
  evaluating predictive uncertainty under dataset shift,'' in \emph{Annual
  Conference on Neural Information Processing Systems, NeurIPS 2019}.

\bibitem{posterior2020}
B.~Charpentier, D.~Z{\"{u}}gner, and S.~G{\"{u}}nnemann, ``Posterior network:
  Uncertainty estimation without {OOD} samples via density-based
  pseudo-counts,'' in \emph{Annual Conference on Neural Information Processing
  Systems 2020, NeurIPS 2020}.

\bibitem{Ustatistics2020}
J.~Schupbach, J.~W. Sheppard, and T.~Forrester, ``Quantifying uncertainty in
  neural network ensembles using u-statistics,'' in \emph{2020 International
  Joint Conference on Neural Networks, {IJCNN}}, 2020.

\bibitem{zhang2017universum}
X.~Zhang and Y.~LeCun, ``Universum prescription: Regularization using unlabeled
  data,'' in \emph{Thirty-First AAAI Conference on Artificial Intelligence},
  2017.

\bibitem{lakshminarayanan2017simple}
B.~Lakshminarayanan, A.~Pritzel, and C.~Blundell, ``Simple and scalable
  predictive uncertainty estimation using deep ensembles,'' in \emph{Advances
  in Neural Information Processing Systems}, 2017, pp. 6402--6413.

\bibitem{wolpert1997no}
D.~H. Wolpert, W.~G. Macready \emph{et~al.}, ``No free lunch theorems for
  optimization,'' \emph{IEEE transactions on evolutionary computation}, 1997.

\bibitem{ourarxiv}
N.~Kardan and K.~O. Stanley, ``Fitted learning: Models with awareness of their
  limits,'' \emph{arXiv preprint arXiv:1609.02226}, 2016.

\bibitem{bishop2006pattern}
C.~M. Bishop, \emph{Pattern recognition and machine learning}.\hskip 1em plus
  0.5em minus 0.4em\relax springer, 2006.

\bibitem{simonyan2014very}
K.~Simonyan and A.~Zisserman, ``Very deep convolutional networks for
  large-scale image recognition,'' \emph{arXiv preprint arXiv:1409.1556}, 2014.

\bibitem{huang2017densely}
G.~Huang, Z.~Liu, L.~Van Der~Maaten, and K.~Q. Weinberger, ``Densely connected
  convolutional networks,'' in \emph{Proceedings of the IEEE conference on
  computer vision and pattern recognition}, 2017, pp. 4700--4708.

\bibitem{he2015delving}
K.~He, X.~Zhang, S.~Ren, and J.~Sun, ``Delving deep into rectifiers: Surpassing
  human-level performance on imagenet classification,'' in \emph{Proceedings of
  the IEEE international conference on computer vision}, 2015, pp. 1026--1034.

\bibitem{glorot2010understanding}
X.~Glorot and Y.~Bengio, ``Understanding the difficulty of training deep
  feedforward neural networks,'' in \emph{Proceedings of the thirteenth
  international conference on artificial intelligence and statistics}, 2010,
  pp. 249--256.

\bibitem{kingma2014adam}
D.~P. Kingma and J.~Ba, ``Adam: A method for stochastic optimization,''
  \emph{arXiv preprint arXiv:1412.6980}, 2014.

\bibitem{liu2016delving}
Y.~Liu, X.~Chen, C.~Liu, and D.~Song, ``Delving into transferable adversarial
  examples and black-box attacks,'' \emph{arXiv preprint arXiv:1611.02770},
  2016.

\end{thebibliography}

\end{document}